\documentclass[letterpaper, 10 pt, conference]{ieeeconf}  

\IEEEoverridecommandlockouts                              
\usepackage{times} 
\usepackage{amsmath,amsfonts} 
\usepackage{amssymb}  
\usepackage{tabularx}
\usepackage[normalem]{ulem}
\usepackage{dsfont}
\usepackage{graphicx}
\usepackage{subcaption}
\usepackage[dvipsnames,usenames]{xcolor}
\usepackage[colorlinks,%
linkcolor=BrickRed,%
filecolor=BrickRed,%
citecolor=RoyalPurple,%
]{hyperref}
\usepackage{algorithm}
\usepackage{algorithmic}
\usepackage{overpic}
\usepackage{float}

\def\notes#1{\marginpar{\tiny #1}\typeout{Notes!
Notes!
Notes!
}}
\renewcommand{\notes}[1]{\typeout{notes!}}

\newtheorem{theorem}{Theorem}

\newtheorem{definition}{Definition}

\newtheorem{remark}{Remark}
\newtheorem{proposition}{Proposition}

\newtheorem{assumption}{Assumption}

\def\beq{\begin{eqnarray}} 
\def\bc{\begin{center}} 
\def\be{\begin{enumerate}}
\def\bi{\begin{itemize}} 
\def\bs{\begin{small}}
\def\bS{\begin{slide}}
\def\ec{\end{center}} 
\def\ee{\end{enumerate}}
\def\ei{\end{itemize}}
\def\es{\end{small}}
\def\eS{\end{slide}}
\def\eeq{\end{eqnarray}}

\newcounter{rmnum}

\newcounter{anum}

\title{\LARGE \bf
Conditional Sampling via Wasserstein Autoencoders \\and Triangular Transport
}

\author{Mohammad Al-Jarrah$^{\star,\dagger}$  \and Michele Martino $^\dagger$ \and Marcus Yim$^\ddagger$\and 
Bamdad Hosseini$^\dagger$ \and 
Amirhossein Taghvaei$^\star$
    {\thanks{$^\star$Department of Aeronautics \& Astronautics, University of Washington, Seattle {\tt\small mohd9485@uw.edu,amirtag@uw.edu}.}}
    {\thanks{$^\dagger$Department of Applied Mathematics, University of Washington, Seattle
        {\tt\small mohd9485@uw.edu,michem29@uw.edu,bamdadh@uw.edu}.}}
    {\thanks{$^\ddagger$Department of Mathematics, University of Washington, Seattle {\tt\small myim3@uw.edu}.}}
        \thanks{M. Al-Jarrah led the computational implementation. M. Martino led the methodology development. M. Yim developed the flow field experiment. B. Hosseini and A. Taghvaei supervised the research and contributed to the preparation of the manuscript.
        M. Al-Jarrah and A. Taghvaei are supported by the National Science Foundation (NSF) award EPCN-2318977. B. Hosseini and M. Martino are supported by the NSF award DMS-2337678.
        }
}

\begin{document}
	
      \maketitle
	 \thispagestyle{empty}
	 \pagestyle{empty}

\begin{abstract}
We present Conditional Wasserstein Autoencoders (CWAEs), a framework for conditional simulation that exploits low-dimensional structure in both the conditioned and the conditioning variables. The key idea is to modify a Wasserstein autoencoder to use a (block-) triangular decoder and impose an appropriate independence assumption on the latent variables. We show that the resulting model gives an autoencoder that can exploit low-dimensional structure while simultaneously the decoder can be used for conditional simulation. We explore various theoretical properties of CWAEs, including their connections to conditional optimal transport (OT) problems. We also present alternative formulations that lead to three architectural variants forming the foundation of our algorithms. We present a series of numerical experiments that demonstrate that our different CWAE variants achieve substantial reductions in approximation error relative to the low-rank ensemble Kalman filter (LREnKF), particularly in problems where the support of the conditional measures is truly low-dimensional.
\end{abstract}

\section{Introduction}
Sampling from conditional probability distributions is a fundamental task in Bayesian inference, nonlinear filtering, and machine learning~
\cite{baptista2024conditional,kaipio2005statistical,marzouk2017sampling,stuart2010inverse}. Given samples from a joint distribution $P_{Y,X}$, the objective is to
generate samples from the conditional distribution $P_{X|Y=y}$ for
arbitrary values of the conditioning variable $y$. In nonlinear filtering, for instance, $X$ represents unknown states of a system and $Y$ denotes observed noisy sensory data, so the posterior $P_{X|Y=y}$ fully characterizes the inferred quantities and their uncertainties. This task becomes particularly challenging when $X$ and $Y$ are high-dimensional, for instance, when they represent physical fields or images, as classical methods such as particle filters suffer from the curse of dimensionality~\cite{bengtsson08,beskos2014error,bickel2008sharp,rebeschini2015can}. The goal of this work is to develop a scalable, data-driven framework for conditional sampling that exploits low-dimensional structure whenever it is present. 

Our approach is inspired by two complementary methodologies: \emph{measure transport} methods for conditional simulation, and autoencoder-based generative models. In the transport framework, one seeks a deterministic map, typically triangular or block-triangular, that transforms a reference distribution into the joint distribution $P_{Y,X}$, from which the conditional distributions $P_{X|Y=y}$ can be accessed directly after training
~\cite{marzouk2017sampling,baptista2024conditional,hosseini2025conditional,al2025error,taghvaei2021optimal}. These transport maps have provided an effective, amortized alternative to particle filters and the ensemble Kalman filter (EnKF)~\cite{al-jarrah2024nonlinear,al2024data,al2025fast}.
However, their direct application becomes increasingly difficult as the ambient dimension grows, since representing a full triangular map in high dimensions is computationally demanding. Existing high-dimensional methods, such as localized EnKF or particle filter variants, address this by incorporating problem-specific low-dimensional structure on a case-by-case basis~\cite{houtekamer2001sequential,snyder2008obstacles,rebeschini2015can}. Our goal, by contrast, is to develop a framework that discovers and exploits such structure automatically from data.

A parallel line of work pursues this goal through explicit dimensionality reduction. The likelihood-informed subspace (LIS) approach identifies directions along which the posterior departs most significantly from the prior using likelihood gradients, enabling inference in a reduced subspace with dimension-independent performance guarantees~\cite{cui2014likelihood,cui2021analysis}. More broadly, low-rank and subspace-based methods have been developed for filtering and inverse problems~\cite{leprovost2022lowrank}, and recent work has combined these ideas with triangular transport maps to yield subspace-accelerated inference methods~\cite{cui2025subspace}. While effective, these approaches typically require access to likelihood gradients or other problem-specific information that may not be available in a purely data-driven setting. Here, we take a different approach and propose to learn the relevant low-dimensional structure directly from data.

To achieve this, we draw on the autoencoder framework~\cite{kingma2014autoencoding}, and in particular the Wasserstein autoencoder (WAE)~\cite{tolstikhin2018wasserstein}. The key idea is to introduce a low-dimensional latent random variable that captures the essential structure of the data, and to train an encoder-decoder pair by minimizing the Wasserstein distance between the generated and true distributions, subject to a constraint on the aggregated latent distribution. We generalize this to the conditional setting by combining the WAE framework with block-triangular transport. Specifically, we show that imposing a block-triangular structure on the decoder, together with independently coupled latent variables, yields a conditional generative model whose training objective admits an interpretation as a WAE on the joint space. Furthermore, by imposing a block-triangular structure over the class of admissible
encoders, we formulate the \emph{Conditional Wasserstein Autoencoder (CWAE)} which directly learns transport maps capable of generating samples from conditional distributions by minimizing a specific conditional transport objective. This provides a scalable, fully data-driven alternative to existing transport and subspace-based methods. 
Notably, our approach differs from the Paired Wasserstein Autoencoder for conditional sampling~\cite{piening2024paired}: rather than imposing a special paired WAE structure, we adopt a more general block-triangular latent structure, and we minimize a conditional transport cost that directly measures the discrepancy between conditional distributions, as opposed to comparing joint distributions. This distinction is discussed in detail in Sec.~\ref{sec:CWAE}.
We analyze several equivalent formulations of our method, which agree at optimality but differ in behavior when suboptimal, and investigate the associated architectural design choices on a range of numerical experiments.

The rest of the paper is organized as follows: 
Sec.~\ref{sec:problem_formulation} includes the problem formulation and the background on WAE; Sec.~\ref{sec:CWAE} contains the proposed methodology; and Sec.~\ref{sec:numerics} presents the numerical algorithm accompanied by several numerical experiments.

\section{Problem Formulation \& Background}\label{sec:problem_formulation}
\subsection{Problem Setup}\label{sec: problem setup}
We are given training data consisting of paired samples from two random  variables $(Y, X)$, whose joint distribution is denoted $P_{Y,X}$. The  variable $Y \in \mathcal{Y}$ plays the role of an observation, while $X \in \mathcal{X}$ represents  the quantity of interest. Both $\mathcal{X}$ and $\mathcal{Y}$ are high-dimensional ambient spaces. 
Our goal is to learn, from data alone, an efficient sampler for the conditional distribution $P_{X|Y=y}$ that works for any query value  $y \in \mathcal{Y}$.

Directly learning conditional distributions in high-dimensional spaces is computationally challenging.  The key assumption that makes this problem tractable is that the joint distribution $P_{Y,X}$ admits a  \emph{low-dimensional structure}: although $X$ and $Y$ live in high-dimensional spaces, their statistical dependence is driven by a much smaller number of \emph{degrees of freedom}.

To formalize this, we introduce two independent latent random variables  $Z \in \mathcal{Z}$ and $U \in \mathcal{U}$, where $\mathcal{Z}$ and $\mathcal{U}$ are low-dimensional latent spaces.  Their joint reference
distribution is the product measure $P_Z \otimes P_U$, 
reflecting the assumption that $Z$ and $U$ are independent. The marginal distributions $P_Z$ and $P_U$ are selected to be simple and easy to sample from (e.g. Gaussian). We seek a \emph{generative map} $G: \mathcal{Z} \times \mathcal{U} \to  \mathcal{Y} \times \mathcal{X}$ that pushes forward this reference distribution to the joint data distribution $P_{Y,X}$. To enable conditional
sampling, we require $G$ to be \emph{block-triangular}.

\begin{definition}[Block-Triangular Map]
A map $G: \mathcal{Z} \times \mathcal{U} \to \mathcal{Y} \times 
\mathcal{X}$ is called \emph{block-triangular} if it admits the 
form
\begin{align}\label{eq:block-triangular}
    G(z, u) = \Bigl(G_{\mathcal{Y}}(z),\; 
    G_{\mathcal{X}}\bigl(G_{\mathcal{Y}}(z),\, u\bigr)\Bigr),
\end{align}
for maps $G_{\mathcal{Y}}: \mathcal{Z} \to \mathcal{Y}$ and 
$G_{\mathcal{X}}: \mathcal{Y} \times \mathcal{U} \to \mathcal{X}$.
\end{definition}

The block-triangular structure is precisely what allows access to
conditional distributions. Specifically, by~\cite[Thm. 2.4]{baptista2024conditional}, if a block-triangular map $G$ of the
form~\eqref{eq:block-triangular} satisfies
\begin{equation}\label{eq:push-forward-condition}
  P_{Y,X} = G \# \bigl(P_{Z} \otimes P_{U}\bigr),  
\end{equation}
where ``$\#$" denotes the pushforward operator, then, its components satisfy
\begin{equation}\label{eq:pushforward-conditional}
\begin{aligned}
G_{\mathcal{Y}}\# P_{Z} &= P_Y,\\
G_{\mathcal{X}}(y,\cdot)\# P_{U} &= P_{X \mid Y = y},\quad \textnormal{for } P_Y\textnormal{-a.e. } y \in \mathcal{Y}\,.  
\end{aligned}
\end{equation}
In other words, $G_{\mathcal{Y}}(Z)$ generates samples of $Y\sim P_Y$ from the
latent variable $Z\sim P_Z$, while $G_{\mathcal{X}}(y, U)$ generates samples from
the conditional distribution $P_{X|Y=y}$ for any fixed $y$, simply by
drawing an independent realization of the latent variable $U \sim P_U$.
Thus, once $G_{\mathcal{X}}$ is learned, conditional sampling reduces
to a single forward pass.

Our central assumption is that such a low-dimensional block-triangular
representation of $P_{Y,X}$ exists.
\begin{assumption}[Low-Dimensional Representation]
\label{assum:low-dim-representation}
There exists a block-triangular map $G^\dagger: \mathcal{Z} \times 
\mathcal{U} \to \mathcal{Y} \times \mathcal{X}$, with components 
$G^\dagger_{\mathcal{Y}}: \mathcal{Z} \to \mathcal{Y}$ and 
$G^\dagger_{\mathcal{X}}: \mathcal{Y} \times \mathcal{U} \to \mathcal{X}$, 
such that
\begin{equation}
    P_{Y,X} = G^\dagger \# \bigl(P_Z \otimes P_U\bigr)\,.
\end{equation}
\end{assumption}

\medskip 
\noindent
{\bf Problem 1:} Approximate a block-triangular generator $G$  of the form~\eqref{eq:block-triangular}, that satisfies~\eqref{eq:push-forward-condition},   from samples of $(Y, X)$.

\medskip
\noindent
{\bf Non-compositional generator:} 
In the formulation above, the conditional sampler $G_{\mathcal{X}}(y, U)$
takes the raw observation $y \in \mathcal{Y}$ as input directly, meaning
it operates on the full high-dimensional ambient space $\mathcal{Y}$.
While this is what enables plug-in conditional sampling, it may be
inefficient when $\mathcal{Y}$ is very high-dimensional, since the
generator must process the full observation at inference time.
An alternative formulation reduces this cost by introducing an
\emph{encoder} $\Phi_{\mathcal{Y}}: \mathcal{Y} \to \mathcal{Z}$ that
first compresses the observation $y$ into a low-dimensional latent code
$z = \Phi_{\mathcal{Y}}(y) \in \mathcal{Z}$. The conditional generator
then operates on the latent space rather than the ambient observation
space. Specifically, instead of a compositional map
$G_{\mathcal{X}}(G_{\mathcal{Y}}(z), u)$, we seek a
\emph{non-compositional} generator
$\overline{G}_{\mathcal{X}}: \mathcal{Z} \times \mathcal{U} \to
\mathcal{X}$, supported entirely on the low-dimensional latent space
$\mathcal{Z} \times \mathcal{U}$. The resulting conditional sampler takes
the form
\begin{align}\label{eq:conditional-recovery-noncomp}
    P_{X \mid Y = y} \approx
    \overline{G}_{\mathcal{X}}\!\bigl(\Phi_{\mathcal{Y}}(y),\, \cdot\,
    \bigr) \#\, P_U,
\end{align}
so that samples from $P_{X|Y=y}$ are obtained by encoding $y$ into the
latent space and then pushing $U \sim P_U$ through
$\overline{G}_{\mathcal{X}}(\Phi_{\mathcal Y}(y),\, \cdot\,)$. For this formulation to be well-posed, the encoder $\Phi_{\mathcal{Y}}$
must be a left inverse of the generator component $G_{\mathcal{Y}}^\dagger$
on the support of $P_Y$. This is the content of the following assumption. 
\begin{assumption}[Left-Invertibility of the Generator]
\label{assum:invertibility-on-support}
There exists a set $\mathcal{S}_{\mathcal{Y}} \subseteq \mathcal{Y}$
with $P_Y(\mathcal{S}_{\mathcal{Y}}) = 1$, and a measurable map
$\Phi^\dagger_{\mathcal{Y}}: \mathcal{S}_{\mathcal{Y}} \to \mathcal{Z}$
such that
\begin{align*}
    \Phi^\dagger_{\mathcal{Y}} \circ G^\dagger_{\mathcal{Y}}(z) &= z,
    \qquad \text{for all } z \in
    G^{\dagger\,-1}_{\mathcal{Y}}(\mathcal{S}_{\mathcal{Y}}), \\
    G^\dagger_{\mathcal{Y}} \circ \Phi^\dagger_{\mathcal{Y}}(y) &= y,
    \qquad \text{for all } y \in \mathcal{S}_{\mathcal{Y}},
\end{align*}
where $G^{\dagger\,-1}_{\mathcal{Y}}(\mathcal{S}_{\mathcal{Y}})$ denotes
the preimage of $\mathcal{S}_{\mathcal{Y}}$ under $G^\dagger_{\mathcal{Y}}$.
\end{assumption}
Under this assumption, the non-compositional generator
$G:\mathcal Z \times \mathcal U \to \mathcal Y \times \mathcal X$ can be defined by
\begin{equation}\label{eq:non-compose-G}
    G(z, u) = \Bigl(G_{\mathcal{Y}}(z),\; 
    \overline G_{\mathcal{X}}(z,u)\Bigr). 
\end{equation}

\medskip 
\noindent
{\bf Problem 2:} Approximate a block-triangular generator $G$ of the form~\eqref{eq:non-compose-G}, that satisfies~\eqref{eq:push-forward-condition}, and an encoder $\Phi_{\mathcal Y}$ that is the inverse of $G_{\mathcal Y}$, in the sense defined in Assumption~\ref{assum:invertibility-on-support},   from samples of $(Y, X)$.

\subsection{Background on Wasserstein AutoEncoders}\label{sec: background WAE}
In this section, we give a brief review on WAEs~\cite{tolstikhin2018wasserstein}. To this end, we consider the unconditional setup where, without incorporating the observation variable $Y$, the goal is to train a deterministic decoder (generator) $G:\mathcal{U}\to\mathcal{X}$ such that
\[
G\#P_U= P_X.
\]
That is to say, $G(U)$ generates samples of $X \sim P_X$ through samples of $U \sim P_U$. 
In the WAE framework, the desired generator is obtained by minimizing the Wasserstein distance 
\[
\inf_G\,W_c(P_X, G\#P_U),
\]
over all generator maps $G$, 
for a prescribed cost function $c:\mathcal{X}\times \mathcal{X}\to \mathbb{R}_{\geq 0}$.
The key idea in WAE is to form an equivalent expression for the Wasserstein distance that is amenable to optimization, via the introduction of an encoder $Q_{U|X}$ that couples $U$ and $X$. Indeed, as shown in ~\cite[Thm. 2.4]{tolstikhin2018wasserstein},
\begin{align*}
&W_c(P_X, G\#P_U)
=
\displaystyle\inf_{Q: Q_U = P_U}
\mathbb{E}
~c(X, G(U)),\\
& \textit{where} \quad X\sim P_X,\quad U\sim Q_{U\mid X},
\end{align*}
and $Q_U=\int_{\mathcal{X}}Q_{U|X=x}P_X(dx)$,
denotes the marginal of $U$ (also called aggregated posterior) induced by the encoder.
Since enforcing the constraint $Q_U = P_U$ exactly is intractable in practice, WAEs relax it by introducing a penalty term, leading to the regularized minimization problem
\begin{align}
&\displaystyle\inf_{G}\inf_{Q}
\mathbb{E}~c(X, G(U))+
\lambda \, D_\mathcal{U}(Q_U, P_U),\label{eq:wae_loss}
\\
& \textit{where} \quad X\sim P_X,\quad U\sim Q_{U\mid X},\nonumber
\end{align}
where $D_\mathcal{U}$ is a chosen discrepancy between distributions supported on the latent space and $\lambda > 0$ is a regularization parameter.
The first term promotes accurate reconstruction of the data through the decoder, while the second term encourages the aggregated posterior $Q_U$ to match the prior $P_U$. 
The choice of $D_\mathcal{U}$ is flexible and may correspond to any divergence or integral probability metric (e.g., adversarial objectives or kernel-based discrepancies). Importantly, this discrepancy is evaluated on distributions over the low-dimensional latent space $\mathcal U$, making it computationally tractable and protected from the curse of dimensionality.

In the next section, 
we extend the WAE framework reviewed above
to the conditional simulation setup introduced in Sec.~\ref{sec: problem setup}, particularly to develop algorithms solving {Problem 1} and {Problem 2}.

\section{Conditioning via Wasserstein AutoEncoders}\label{sec:CWAE}

\subsection{Joint vs. Conditional OT Costs}\label{sec: BTT}

In this section, we introduce two optimization objectives for learning 
the block-triangular maps that solve {Problem~1} and {Problem~2}. The first is based on the Wasserstein distance 
between joint distributions, while the second is based on a strictly 
stronger notion of conditional Wasserstein distance that directly 
penalizes discrepancies between conditional distributions.

\medskip
\noindent
{\bf Joint OT cost:} One natural approach to solving {Problem 1} is to minimize the
\emph{joint OT cost}
\begin{align}\label{eq: Joint cost}W_c(P_{Y,X}, G\#P_{Z,U}),
\end{align}
over all maps $G$ that admit the block-triangular structure in~\eqref{eq:block-triangular}.   At optimality, that is, when the objective reaches zero, the
pushforward condition~\eqref{eq:push-forward-condition} holds exactly, and the
conditional distributions are recovered through $P_{X|Y=y}=G_{\mathcal{X}}(y, \cdot)_{\#} P_U$ as in~\eqref{eq:pushforward-conditional}.
However, when the minimum value is nonzero, the joint cost provides
no direct control over the quality of the individual conditional
approximations $G_{\mathcal{X}}(y, \cdot)_{\#} P_U$ relative to the
target conditionals $P_{X|Y=y}$, for any particular value of  $y$.  This
limitation motivates the conditional OT cost introduced
below.

\medskip
\noindent
{\bf Conditional OT cost:}
Rather than measuring discrepancy at the level of the joint distribution,
we can instead integrate the pointwise error between
$G_{\mathcal{X}}(y, \cdot)_{\#} P_U$ and $P_{X|Y=y}$ over all
realizations of $y$:
\begin{align}\label{eq: COT cost} \int_{\mathcal{Y}} W_{c_{\mathcal{X}}}\left(P_{X\mid Y=y}, G_{\mathcal{X}}(y,\cdot)\#P_U\right)P_Y(dy),
\end{align}
where $c_{\mathcal{X}}:\mathcal{X}\times\mathcal{X}\to\mathbb{R}_{\geq 0}$ is a marginal cost.
This quantity, related to the notion of
conditional Wasserstein distance~\cite{carlier2016vector,
hosseini2025conditional}, provides a more direct  control over the quality of the conditional approximation than the joint cost.
To make this precise, suppose the joint cost $c$ is additive and separable,
\begin{equation}\label{eq: cost in YX}
c((y,x),(z,u)):= c_\mathcal{Y}(y,z)+c_\mathcal{X}(x,u).
\end{equation}
Then the following inequality holds:
\begin{align*}
&W_c(P_{Y,X},G\#P_{Y,U})\\
&\leq \int_{\mathcal{Y}} W_{c_{\mathcal{X}}}\left(P_{X\mid Y=y}, G_{\mathcal{X}}(y,\cdot)\#P_U\right)P_Y(dy).
\end{align*}
implying that the conditional cost~\eqref{eq: COT cost} is a stronger notion of error than the joint cost. The inequality follows using the definition of the joint Wasserstein distance and picking a non-optimal coupling that induces the right-hand-side.

However, the objective~\eqref{eq: COT cost} is not amenable to our latent variable and generator framework, unless we set $Z=Y$, or equivalently restrict  $G_\mathcal{Y}=\mathrm{Id}_{\mathcal{Y}}$. To address this issue and retain the flexibility of our framework, we consider the setup of Problem 2 where $Y$ is encoded into the latent variable $Z$ through $\Phi_\mathcal{Y}$. This encoder induces a coupling between $Y$ and
$Z$, and hence a well-defined conditional distribution $P_{X|Z=z}$. This motivates the following surrogate objective, which we call the
\emph{latent conditional OT cost}:
\begin{align}
\mathcal{R}_Z(\overline{G}_\mathcal{X};&\Phi_\mathcal{Y})\label{eq: compressed mean conditional cost}\\
&:=\int_{\mathcal{Z}} W_{c_\mathcal{X}}\left(P_{X\mid Z=z}, \overline{G}_{\mathcal{X}}(z, \cdot) \# P_U\right)P_Z(dz),\nonumber
\end{align}
where $\Phi_{\mathcal{Y}}$ enters the objective implicitly through its
effect on the conditional distribution $P_{X|Z}$. 
 The term ``latent''
reflects the fact that we measure conditional discrepancies with respect
to $Z = \Phi_{\mathcal{Y}}(Y)$, the low-dimensional encoding of $Y$,
rather than $Y$ itself. This cost serves as a tractable surrogate for
the conditional cost~\eqref{eq: COT cost}, recovering it exactly when $Z = Y$.

Moreover, whenever the encoder is exact, i.e., $\Phi_\mathcal{Y}$ is a valid solution of Problem 2, 
the latent conditional OT cost \eqref{eq: compressed mean conditional cost} matches the \emph{conditional OT cost}
\begin{align}
\mathcal{R}&_Y(\overline{G}_\mathcal{X};\Phi_\mathcal{Y})\label{eq: mean conditional cost}\\
&:=\int_{\mathcal{Y}} W_{c_\mathcal{X}}\left(P_{X\mid Y=y}, \overline{G}_{\mathcal{X}}(\Phi_\mathcal{Y}(y), \cdot) \# P_U\right)P_Y(dy),\nonumber
\end{align}
This is shown in the following proposition, where we also quantify the gap between the two objectives by the following notion of 
\emph{conditional representation error}
\begin{align}\label{eq: conditional representation error}
\mathcal{E}(\Phi_\mathcal{Y}):=\int_\mathcal{Y} W_{c_\mathcal{X}}\left(P_{X\mid Y=y}, P_{X\mid Z = \Phi_{\mathcal{Y}}(y)}\right)P_Y(dy),
\end{align}
that measures  the average mismatch between the target conditional distributions and their latent analogue. 

\begin{proposition}\label{prop: conditional error bounds}
For any encoder approximation $\Phi_\mathcal{Y}:\mathcal{Y}\to\mathcal{Z}$ encoding $Z=\Phi_\mathcal{Y}(Y)$ and transport map $\overline{G}_\mathcal{X}:\mathcal{Z}\times\mathcal{U}\to\mathcal{X}$, it holds that:
\begin{align}\label{eq: upper and lower bound}
\begin{split}
\mathcal{R}_Z(\overline{G}_\mathcal{X};\Phi_\mathcal{Y})&\leq \mathcal{R}_Y(\overline{G}_\mathcal{X};\Phi_\mathcal{Y})\\
&\leq \mathcal{R}_Z(\overline{G}_\mathcal{X};\Phi_\mathcal{Y})+\mathcal{E}(\Phi_\mathcal{Y}).
\end{split}
\end{align}
Moreover, if $\Phi_\mathcal{Y}$ is a valid solution of Problem 2, it follows that $\mathcal{E}(\Phi_\mathcal{Y}) = 0$ and $\mathcal{R}_Y(\overline{G}_\mathcal{X};\Phi_\mathcal{Y}) = \mathcal{R}_Z(\overline{G}_\mathcal{X};\Phi_\mathcal{Y})$.
\end{proposition}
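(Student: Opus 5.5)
Write $\mu_y := \overline{G}_\mathcal{X}(\Phi_\mathcal{Y}(y),\cdot)\#P_U$, and recall that $Z=\Phi_\mathcal{Y}(Y)$ with $P_Z = \Phi_\mathcal{Y}\#P_Y$. The whole argument rests on one structural fact: because $Z$ is a deterministic function of $Y$, the tower property gives, for $P_Y$-a.e.\ $y$,
\begin{equation*}
P_{X\mid Z=z} = \int_{\mathcal{Y}} P_{X\mid Y=y'}\, P_{Y\mid Z=z}(dy'),
\end{equation*}
where $P_{Y\mid Z=z}$ is concentrated on the fiber $\Phi_\mathcal{Y}^{-1}(z)$. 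By the change of variables $z=\Phi_\mathcal{Y}(y)$ we also have
\begin{equation*}
\mathcal{R}_Z = \int_\mathcal{Y} W_{c_\mathcal{X}}\bigl(P_{X\mid Z=\Phi_\mathcal{Y}(y)},\mu_y\bigr)P_Y(dy).
\end{equation*}
So $\mathcal{R}_Z$ and $\mathcal{R}_Y$ are integrals against the same measure $P_Y$, and the integrands differ only in whether $P_{X\mid Y=y}$ or $P_{X\mid Z=\Phi_\mathcal{Y}(y)}$ is compared with $\mu_y$.

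\emph{Lower bound.} I would use joint convexity of $(\mu,\nu)\mapsto W_{c}(\mu,\nu)$, which holds for any nonnegative cost because mixing optimal couplings gives an admissible coupling of the mixtures. On the fiber over $z$ the target $\mu_{y'}$ is the same for every $y'$; call it $\nu_z=\overline{G}_\mathcal{X}(z,\cdot)\#P_U$. Writing $P_{X\mid Z=z}$ as the mixture above and $\nu_z$ as the trivial mixture of itself, convexity gives
\begin{equation*}
W_{c_\mathcal{X}}(P_{X\mid Z=z},\nu_z)\le \int W_{c_\mathcal{X}}(P_{X\mid Y=y'},\nu_z)\,P_{Y\mid Z=z}(dy').
\end{equation*}
Integrating against $P_Z$ and applying the disintegration $P_Y=\int P_{Y\mid Z=z}P_Z(dz)$ yields $\mathcal{R}_Z\le\mathcal{R}_Y$.

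\emph{Upper bound.} This is a pointwise triangle inequality,
\begin{equation*}
W_{c_\mathcal{X}}(P_{X\mid Y=y},\mu_y)\le W_{c_\mathcal{X}}(P_{X\mid Y=y},P_{X\mid Z=\Phi_\mathcal{Y}(y)})+W_{c_\mathcal{X}}(P_{X\mid Z=\Phi_\mathcal{Y}(y)},\mu_y),
\end{equation*}
integrated against $P_Y$. The main obstacle is that the triangle inequality needs $W_{c_\mathcal{X}}$ to be a metric. I would therefore assume, as implicitly in the paper, that $c_\mathcal{X}$ is a metric, so that $W_{c_\mathcal{X}}$ is $W_1$. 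For $c=d^p$ one would instead work with $W_p=W_{d^p}^{1/p}$. Measurability of $y\mapsto W(\cdot,\cdot)$ and the existence of regular conditional distributions on Polish spaces are standard and will just be cited.

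\emph{The case where $\Phi_\mathcal{Y}$ solves Problem 2.} By Assumption~\ref{assum:invertibility-on-support}, $\Phi_\mathcal{Y}$ is injective on $\mathcal{S}_\mathcal{Y}$ with left inverse $G^\dagger_\mathcal{Y}$, so $\sigma(Z)=\sigma(Y)$ up to null sets, since $Y=G^\dagger_\mathcal{Y}(Z)$ a.s. Hence each fiber is a single point a.s., $P_{Y\mid Z=z}=\delta_{G^\dagger_\mathcal{Y}(z)}$, and $P_{X\mid Z=\Phi_\mathcal{Y}(y)}=P_{X\mid Y=y}$ for $P_Y$-a.e.\ $y$. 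This gives $\mathcal{E}(\Phi_\mathcal{Y})=0$, and the two-sided bound then forces $\mathcal{R}_Y=\mathcal{R}_Z$.
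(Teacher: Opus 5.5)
Your proof is correct and follows the paper's route. It uses convexity of $W_{c_\mathcal{X}}(\cdot,\nu_z)$ along the mixture $P_{X\mid Z=z}=\int P_{X\mid Y=y}P_{Y\mid Z=z}(dy)$ for the lower bound, a pointwise triangle inequality integrated against $P_Y$ for the upper bound, and the identity $P_{X\mid Z=\Phi_\mathcal{Y}(y)}=P_{X\mid Y=y}$ from left-invertibility for the final claim. Your $\sigma(Z)=\sigma(Y)$ argument for that identity even avoids the generative form of Assumption~\ref{assum:low-dim-representation}, which the paper uses.

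You are right that the upper bound implicitly needs $W_{c_\mathcal{X}}$ to satisfy the triangle inequality, as it does for a metric cost. However, your suggested alternative for $c=d^p$ fails: $W_p(\cdot,\nu)$ is not convex under mixtures for $p>1$ (e.g.\ $W_p(\tfrac12\delta_0+\tfrac12\delta_1,\delta_0)=2^{-1/p}>\tfrac12$), so switching to $W_p$ would break the lower bound.
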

\begin{proof}
Introduce the compact notation 
\[\widetilde{P}_z:=\overline{G}_\mathcal{X}(z,\cdot)\# P_U.\]
We first show the lower bound in~\eqref{eq: upper and lower bound}. Recall that, 
\[
P_{X\mid Z = z} = \int_{\mathcal{Y}}P_{X\mid Y = y}P_{Y\mid Z = z}(dy).
\]
Then, by the convexity  of the optimal transport cost $W_{c_\mathcal{X}}(\cdot, \widetilde{P}_z)$~\cite[Thm. 4.8]{villani2009optimal},
\[
W_{c_\mathcal{X}}(P_{X\mid Z=z}, \widetilde{P}_z)\leq \int_{\mathcal{Y}}W_{c_\mathcal{X}}(P_{X\mid Y=y}, \widetilde{P}_z)P_{Y\mid Z =z}(dy).
\]
Integrating over $P_{Z}(dz)$, and noticing that in the integrated right hand side $z=\Phi_\mathcal{Y}(y)$ along the support of $P_{Y,Z}(dy,dz)$, ultimately yields the lower bound.

To obtain the upper bound, by triangle inequality,
\begin{align*}
\begin{split}
W_{c_{\mathcal{X}}}(P_{X\mid Y=y}, \widetilde{P}_{\Phi_{\mathcal{Y}}(y)})&\leq W_{c_{\mathcal{X}}}(P_{X\mid Y=y}, P_{X\mid Z=\Phi_{\mathcal{Y}}(y)})\\
&+ W_{c_{\mathcal{X}}}(P_{X\mid Z=\Phi_{\mathcal{Y}}(y)}, \widetilde{P}_{\Phi_{\mathcal{Y}}(y)}).
\end{split}
\end{align*}
Then, integrating with respect to $P_{Y}(dy)$ and applying the pushforward identity $P_{Z} = \Phi_{\mathcal{Y}} \# P_Y$ to the second term of the integrated right hand side yields the upper bound.

Ultimately, suppose that $\Phi_{\mathcal{Y}}$ solves Problem 2. Then, for $P_Y$-a.e. $y\in\mathcal{Y}$, by Assumptions~\ref{assum:low-dim-representation} and~\ref{assum:invertibility-on-support},
\begin{align*}
\begin{split}
P_{X\mid Z = \Phi_\mathcal{Y}(y)}&= G^{\dag}_{\mathcal{X}}(G^\dag_\mathcal{Y}(\Phi_\mathcal{Y}(y)), \cdot)_\# P_U\\
& =G^\dag_\mathcal{X}(y, \cdot) \# P_U = P_{X\mid Y = y}.
\end{split}
\end{align*}
Equivalently, $X\perp Y \mid Z$ and $\mathcal{E}(\Phi_\mathcal{Y}) = 0$.
\end{proof}
Now that we have introduced the objective functions, the next two sections provide the details behind their minimization. 
\subsection{The WAE for Conditioning}\label{sec: conditioning with WAE}
The Wasserstein Autoencoder for Conditioning (WAE-C) trains a deterministic decoder $G_{\mathcal{X}}^*:\mathcal{Y}\times\mathcal{U}\to\mathcal{X}$ to generate samples from $P_{X^*\mid Y=y}\approx P_{X\mid Y=y}$ where
\begin{align}\label{eq: conditional reconstructions 1}
P_{X^*\mid Y=y} := G_{\mathcal{X}}^*(y,\cdot) \#P_{U}
\end{align}
for any prescribed conditioning value $y\in \mathcal{Y}$.
The training objective is formulated as the 
minimization of the joint OT cost in~\eqref{eq: Joint cost}
over block-triangular decoders $G$ parameterized as in~\eqref{eq:block-triangular}.
Following the WAE framework reviewed in Sec.~\ref{sec: background WAE}, for a chosen discrepancy $D_{\mathcal{Z}\times \mathcal{U}}$ between distributions supported on the joint latent space $\mathcal{Z}\times\mathcal{U}$, minimizing the joint OT cost is relaxed and formulated as the following regularized optimization problem
\begin{align}
&\displaystyle\inf_{G}\inf_{Q}\mathbb{E}\big[ c((Y,X), G(Z,U)) \big]+
\lambda \, D_{\mathcal{Z}\times \mathcal{U}}(Q_{Z,U}, P_{Z,U}),\nonumber\\
&\textit{where}\quad Y,X\sim P_{Y,X},\quad Z,U \sim Q_{Z,U|Y,X},\label{eq: WAE-C objective}
\end{align}
and 
\begin{align}
&G(z, u) = \left(G_\mathcal{Y}(z), G_\mathcal{X}\left(G_\mathcal{Y}(z), u\right)\right),\label{eq: additional constraints}\\
&Q_{Z,U} := \displaystyle\int_{\mathcal{Y}\times \mathcal{X}}Q_{Z,U\mid Y=y,X =x}P_{Y,X}(dy,dx),\nonumber\\
&P_{Z,U} = P_{Z}\otimes P_{U}.\nonumber
\end{align}

Comparing~\eqref{eq:wae_loss} and~\eqref{eq: WAE-C objective}, WAE-C can be interpreted as an instance of WAE applied to the joint space, with the additional constraints of a block-triangular decoder and an independently coupled latent variable pair as in~\eqref{eq: additional constraints}. These structural constraints are essential to recover the desired conditional reconstructions in~\eqref{eq: conditional reconstructions 1}.

\subsection{Conditional Wasserstein Autoencoders}\label{sec: sequential CWAE}

The Conditional Wasserstein Autoencoder (CWAE) seeks to learn approximations $\Phi_\mathcal{Y}^*:\mathcal{Y}\to\mathcal{Z}$ and $\overline{G}_\mathcal{X}^*:\mathcal{Z}\times\mathcal{U}\to\mathcal{X}$ to generate samples from $P_{X^*\mid Y^* = y}\approx P_{X\mid Y = y}$ where 
\begin{align}\label{eq: conditional reconstructions 2}
P_{X^*\mid Y^* = y}:= \overline{G}_\mathcal{X}^*(\Phi_\mathcal{Y}^*(y), \cdot) \# P_U,
\end{align}
for any prescribed conditioning value $y\in\mathcal{Y}$. We divide such a procedure in two steps which can then be combined together to yield a collective minimization objective.

\medskip
\emph{Learning $\Phi_\mathcal{Y}^*$}: This is achieved by formulating the WAE objective over the marginal distribution $P_Y$: 
\begin{align}\label{eq: Y cost}
&W_{c_{\mathcal{Y}}}\left(P_Y, G_{\mathcal{Y}}\#P_Z\right)= \inf_{\substack{\Phi_\mathcal{Y}}}\mathbb{E}~c_\mathcal{Y}(Y, G_\mathcal{Y}(\Phi_\mathcal{Y}(Y))),\\
&\textit{such that}\quad \Phi_{\mathcal{Y}}\#P_Y=P_Z,\nonumber
\end{align}
in terms of the cost $c_{\mathcal{Y}}:\mathcal{Y}\times\mathcal{Y}\to\mathbb{R}_{\geq 0}$. Replacing the constraint by the penalty term yields the minimization problem
\begin{align}\label{eq: CWAE1 objective}
\inf_{G_\mathcal{Y}}\inf_{\Phi_\mathcal{Y}}\mathbb{E}~c_{\mathcal{Y}}(Y,G_\mathcal{Y}(\Phi_\mathcal{Y}(Y)))+\lambda D_{\mathcal{Z}}\left(P_Z,\Phi_\mathcal{Y} \# P_Y\right).
\end{align}
 Under Assumptions~\ref{assum:low-dim-representation} and~\ref{assum:invertibility-on-support}, an optimal minimizer $G_\mathcal{Y}^*$ of \eqref{eq: Y cost} and an associated optimal encoder $\Phi_\mathcal{Y}^*$ satisfy  $G_\mathcal{Y}^*\circ \Phi_\mathcal{Y}^*(y)=y$ for $P_{Y}$-a.e. $y\in\mathcal{Y}$.

\medskip
\emph{Learning $\overline{G}_\mathcal{X}^*$}: After learning  $\Phi_\mathcal{Y}^*$, let $Z = \Phi_\mathcal{Y}^*(Y)$. This relationship induces a coupling between $X$ and $Z$, which is denoted by $P_{Z,X}$. With this coupling, we can define and use the latent conditional OT cost~\eqref{eq: compressed mean conditional cost} to learn $\overline{G}_\mathcal{X}^*$. Towards achieving this, we provide an alternative representation of~\eqref{eq: compressed mean conditional cost} via the introduction of an encoder $Q_{U\mid Z,X}$, stated in the following Theorem. 
\begin{theorem}\label{thm: CWAE result} For any $\Phi_\mathcal{Y}^*$ inducing the coupling $Z,X\sim P_{Z,X}$:
\begin{align}
&\mathcal{R}_Z(\overline{G}_\mathcal{X};\Phi_\mathcal{Y}^*)= \inf_{\substack{Q:Q_{Z,U} = P_{Z,U}\\}}\mathbb{E}~ c_{\mathcal{X}}(X,\overline{G}_\mathcal{X}(Z,U)),\nonumber\\
&\textit{where}\quad  Z,X\sim P_{Z,X},\quad U\sim Q_{U\mid Z,X},\label{eq: COT cost rep}
\end{align}
over the block-triangular encoders and aggregating posterior
\begin{align}
&Q_{Z',U\mid Z,X} :=Q_{U\mid Z,X}\delta(Z'-Z),\label{eq: constraint encoders}\\
&Q_{Z,U} := \int_{\mathcal{Z}\times\mathcal{X}}Q_{Z',U\mid Z=z,X=x}P_{Z,X}(dz,dx).\nonumber
\end{align}
\end{theorem}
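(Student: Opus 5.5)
The plan is to prove the identity by disintegrating both sides along $Z$. The constraint $Q_{Z,U}=P_Z\otimes P_U$ then becomes a family of fiberwise marginal constraints, and for each fixed $z$ we can apply the unconditional WAE representation \cite[Thm. 2.4]{tolstikhin2018wasserstein} recalled in Sec.~\ref{sec: background WAE}.

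First I would unpack the constraint set. Because of the block structure \eqref{eq: constraint encoders}, the $Z'$-component of $Q_{Z,U}$ is a copy of $Z$. Hence the $Z$-marginal of $Q_{Z,U}$ is automatically $P_Z=\Phi^*_\mathcal{Y}\#P_Y$, and the $U$-conditional given $Z=z$ is
\[
Q_{U\mid Z=z}=\int_\mathcal{X} Q_{U\mid Z=z,X=x}\,P_{X\mid Z=z}(dx).
\]
So $Q_{Z,U}=P_Z\otimes P_U$ holds if and only if $Q_{U\mid Z=z}=P_U$ for $P_Z$-a.e. $z$. Equivalently, for a.e. $z$ the kernel $x\mapsto Q_{U\mid Z=z,X=x}$ is an encoder whose aggregated posterior under $P_{X\mid Z=z}$ equals $P_U$.

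Next, I would write the objective by the tower property:
\[
\mathbb{E}\,c_\mathcal{X}(X,\overline G_\mathcal{X}(Z,U))=\int_\mathcal{Z}\mathbb{E}_z\big[c_\mathcal{X}(X,\overline G_\mathcal{X}(z,U))\big]P_Z(dz),
\]
where under $\mathbb{E}_z$ we have $X\sim P_{X\mid Z=z}$ and $U\sim Q_{U\mid Z=z,X}$. For fixed $z$, apply the WAE identity with data law $P_{X\mid Z=z}$ and generator $\overline G_\mathcal{X}(z,\cdot)$. The infimum of the inner expectation over admissible fiber encoders equals $W_{c_\mathcal{X}}(P_{X\mid Z=z},\overline G_\mathcal{X}(z,\cdot)\#P_U)$.

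This already gives the inequality ``$\geq$'' directly: for any admissible $Q$, each fiber integrand is at least the corresponding Wasserstein distance, and integrating yields at least $\mathcal{R}_Z$.

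For ``$\leq$'', which I expect to be the main obstacle, the task is to assemble fiberwise (near-)optimal encoders into one jointly measurable kernel $Q_{U\mid Z,X}$. I would proceed as follows.
\begin{itemize}
\item For each $z$, take an optimal coupling $\pi_z$ of $P_{X\mid Z=z}$ and $P_U$ for the cost $(x,u)\mapsto c_\mathcal{X}(x,\overline G_\mathcal{X}(z,u))$. This is the coupling underlying the proof of the WAE identity.
\item Invoke a measurable selection of optimal plans (e.g.\ \cite[Cor. 5.22]{villani2009optimal}), assuming lower semicontinuous costs on Polish spaces, so that $z\mapsto\pi_z$ is measurable.
\item Disintegrate $\pi_z(dx,du)=P_{X\mid Z=z}(dx)\,Q_{U\mid Z=z,X=x}(du)$, jointly measurably in $(z,x)$.
\end{itemize}
The resulting $Q$ satisfies the fiber constraint, and hence $Q_{Z,U}=P_Z\otimes P_U$. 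Its cost equals $\int W_{c_\mathcal{X}}(\cdot,\cdot)\,P_Z(dz)=\mathcal{R}_Z$.

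If measurable selection of exact optimizers is unavailable, I would use an $\varepsilon$-optimal measurable selection instead and let $\varepsilon\to0$. I would also note that measurability of $z\mapsto W_{c_\mathcal{X}}(P_{X\mid Z=z},\overline G_\mathcal{X}(z,\cdot)\#P_U)$ is needed for $\mathcal{R}_Z$ to be well defined, and is implicitly assumed by the paper.
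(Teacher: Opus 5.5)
Your argument is correct, but it takes a different route from the paper. The paper does not disintegrate along $Z$. It lifts the problem to $\mathcal{Z}\times\mathcal{X}$ with the generator $\widetilde{G}(z,u)=(z,\overline{G}_\mathcal{X}(z,u))$ and the degenerate cost $\widetilde{c}$, which equals $c_\mathcal{X}(x,x')$ if $z=z'$ and $+\infty$ otherwise. It then asserts $\mathcal{R}_Z(\overline{G}_\mathcal{X};\Phi^*_\mathcal{Y})=W_{\widetilde{c}}(P_{Z,X},\widetilde{G}\#P_{Z,U})$ and applies the unconditional WAE theorem of \cite{tolstikhin2018wasserstein} once on the joint space. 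Finally it observes that any joint encoder $Q_{Z',U\mid Z,X}$ that does not set $Z'=Z$ almost surely has infinite cost, which yields the block-triangular encoder class.

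The paper's route is shorter and reuses a single black-box result. However, the identity $\mathcal{R}_Z=W_{\widetilde{c}}$ is stated without proof. Its nontrivial half, $W_{\widetilde{c}}\leq\mathcal{R}_Z$, is exactly your step of assembling fiberwise optimal couplings into one jointly measurable plan. Your fiberwise approach therefore makes that hidden measurable-selection step explicit, and it gives the lower bound $\inf_Q\mathbb{E}\,c_\mathcal{X}(X,\overline{G}_\mathcal{X}(Z,U))\geq\mathcal{R}_Z$ immediately.

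One technical correction concerns the selection step. \cite[Cor.~5.22]{villani2009optimal} is stated for a fixed continuous cost. Your fiber cost $(x,u)\mapsto c_\mathcal{X}(x,\overline{G}_\mathcal{X}(z,u))$ varies with $z$, and it need not be lower semicontinuous if $\overline{G}_\mathcal{X}$ is only measurable. A cleaner construction has three steps:
\begin{enumerate}
\item Select optimal plans $\gamma_z$ between $P_{X\mid Z=z}$ and $\overline{G}_\mathcal{X}(z,\cdot)\#P_U$ for the fixed cost $c_\mathcal{X}$.
\item Glue $P_Z(dz)\gamma_z(dx,dy)$ with $P_Z(dz)P_U(du)\delta_{\overline{G}_\mathcal{X}(z,u)}(dy)$ along their common $(z,y)$-marginal.
\item Only then disintegrate with respect to $(z,x)$ to obtain $Q_{U\mid Z,X}$.
\end{enumerate}

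You are also right to read $P_Z$ in the constraint $Q_{Z,U}=P_Z\otimes P_U$ as $\Phi^*_\mathcal{Y}\#P_Y$. The paper makes the same identification in the proof of Proposition~\ref{prop: conditional error bounds}. If the reference law differed from $\Phi^*_\mathcal{Y}\#P_Y$, the admissible set would be empty and the stated identity would fail.
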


\begin{proof}
Introduce the transport map $\widetilde{G}:\mathcal{Z}\times\mathcal{U}\to \mathcal{Z}\times\mathcal{U}$ defined as
\[
\widetilde{G}(z,u) = \left(z,\overline{G}_\mathcal{X}(z,u)\right),
\]
for all $(z,u)\in \mathcal{Z}\times\mathcal{U}$. Then, we can rewrite the conditional OT cost as a joint OT cost
\[
\mathcal{R}_Z(\overline{G}_\mathcal{X};\Phi_\mathcal{Y}^*) = W_{\widetilde{c}}\left(P_{Z,X}, \widetilde{G}\# P_{Z,U}\right),
\]
 in terms of the  cost
\[
\widetilde{c}((z,x),(z,x')):=\begin{cases}
c_{\mathcal{X}}(x,x'),\quad  \textnormal{if }z=z',\\
+\infty,\quad\textnormal{if }z\neq z', 
\end{cases}
\]
for all $((z,x),(z',x'))\in \left(\mathcal{Z}\times\mathcal{X}\right)\times \left(\mathcal{Z}\times\mathcal{X}\right)$. Then, by Theorem 1 of \cite{tolstikhin2018wasserstein}, the joint objective can be rewritten in terms of a joint encoder $Q_{Z',U\mid Z,X}$ as
\begin{align*}
&\mathcal{R}_Z(\overline{G}_\mathcal{X};\Phi_\mathcal{Y}^*)=\inf_{Q:Q
_{Z,U}=P_{Z,U}}\mathbb{E}~\widetilde{c}((Z,X),\widetilde{G}(Z',U)),\\
&\textit{where}\quad Z,X\sim P_{Z,X},\quad Z',U\sim Q_{Z',U\mid Z,X}.
\end{align*}
Finally, noticing that $\widetilde{c}$ is infinity whenever $Z'\neq Z$, the formulation of the joint cost can be written exclusively in terms of $c_\mathcal{X}(X,\overline{G}_{\mathcal{X}}(Z,U))$ under the constraint over the class of admissible encoders as in \eqref{eq: constraint encoders}, completing the proof.
\end{proof}

In light of Theorem~\ref{thm: CWAE result}, $\overline{G}_\mathcal{X}^*$ is then obtained by solving the regularized minimization problem
\begin{align}
&\displaystyle\inf_{\overline{G}_\mathcal{X}}\inf_{Q}\mathbb{E}c_{\mathcal{X}}(X,\overline{G}_\mathcal{X}(Z,U))+ \lambda D_{\mathcal{Z}\times\mathcal{U}}\left(Q_{Z,U}, P_{Z,U}\right),\nonumber\\
&\textit{where}\quad Z,X\sim P_{Z,X},\quad U\sim Q_{U\mid Z,X},\label{eq: CWAE objective 2}
\end{align}
where we recall that we encode $Z=\Phi_\mathcal{Y}^*(Y)$ and that we set $P_{Z,U}=P_Z\otimes P_U$.
\begin{remark}\label{rem: change of parametrization}
The minimization problem in~\eqref{eq: CWAE objective 2} can also be implemented with the compositional parametrization
\begin{align}\label{eq: compositional reparam}
\overline{G}_\mathcal{X}=G_{\mathcal{X}}\circ\left(G_\mathcal{Y}^*\times\mathrm{Id}_\mathcal{U}\right).
\end{align}
using the approximation $G_\mathcal{Y}^*$ of $G_\mathcal{Y}^\dag$ in~\eqref{eq: CWAE1 objective}. This procedure then performs the conditional generation task associated to Problem 1, bypassing the preliminary use of the encoder approximation $\Phi_\mathcal{Y}^*$. However, as explained in Sec.~\ref{sec: problem setup}, we stress that doing so loses the main practical advantage of solving Problem 2, i.e., learning a conditional transport map 
supported on the fully low-dimensional latent space $\mathcal{Z}\times\mathcal{U}$.
\end{remark}

\section{Numerical experiments}\label{sec:numerics}
\subsection{Numerical procedure}\label{sec:numerical_alg}
In the CWAE implementations, the objectives associated to the minimization problems of~\eqref{eq: Y cost} and~\eqref{eq: COT cost rep} introduced in Sec.~\ref{sec: sequential CWAE} are summed and minimized jointly over classes of encoders $\Phi_\mathcal{Y},Q$ and decoders $G_\mathcal{Y},\overline{G}_\mathcal{X}$. The constraints $\Phi_\mathcal{Y}\#P_Y = P
_Z$ and $Q_{Z,U} = P_{Z,U}$ are enforced simultaneously solving the regularized minimization problem, for a pre-fixed penalty parameter $\lambda>0$,
\begin{align}
&\displaystyle\inf_{G_\mathcal{Y}, \overline{G}_\mathcal{X}}\inf_{\Phi_\mathcal{Y},Q}\mathbb{E}~
c((Y,X),(\Phi_\mathcal{Y}(Y),\overline{G}_\mathcal{X}(\Phi_\mathcal{Y}(Y),U)))\nonumber\\
&\qquad\qquad+\lambda D_{\mathcal{Z}\times\mathcal{U}}\left(Q_{Z,U}, P_{Z,U}\right),\nonumber\\
&\textit{where}\quad Y,X\sim P_{Y,X},\quad U\sim Q_{U\mid \Phi_\mathcal{Y}(Y),X}.\label{eq: CCWAE reg objective}
\end{align}
over block-triangular encoders and aggregated posterior
\begin{align*}
&Q_{Z,U\mid Y,X} :=Q_{U\mid Z,X}\delta(Z-\Phi_\mathcal{Y}(Y)),\\
& Q_{Z,U} := \int_{\mathcal{Y}\times\mathcal{X}}Q_{Z,U\mid Y=y,X=x}P_{Y,X}(dy,dx).
\end{align*}
In this setup as well,
~\eqref{eq: CCWAE reg objective} can be implemented with a compositional parametrization akin to~\eqref{eq: compositional reparam} and Remark~\ref{rem: change of parametrization} applies analogously.

In the numerical experiments, we consider fully deterministic CWAE architectures employing several encoder and decoder variants. Specifically, given samples $(y^i, x^i) \sim P_{Y,X}$ and $(z^i, u^i) \overset{\mathrm{i.i.d.}}{\sim} \mathcal{N}(0, \mathrm{Id}_{d_\mathcal{Z} + d_\mathcal{U}})$, we minimize the empirical approximation of~\eqref{eq: CCWAE reg objective} under a quadratic cost:
\begin{equation}\label{eq:ultimate-loss}
\begin{aligned}
    \frac{1}{N}\sum_{i=1}^N & \left (\|x^i-\hat{x}^i\|^2+\|y^i-\hat{y}^i\|^2\right) \\
    +\lambda D_{\mathcal{Z}\times\mathcal{U}} &\left(\frac{1}{N}\sum_{i=1}^N\delta_{(\hat{z}^i,\hat{u}^i)},\frac{1}{N}\sum_{i=1}^N\delta_{(z^i, u^i)} \right)
\end{aligned}
\end{equation}

where $\hat{z}^i=\Phi_{\mathcal{Y}}(y^i)$, $\hat{y}^i = G_\mathcal{Y}(\hat{z}^i)$, and the three architectural variants are defined as follows:
\begin{equation*}
\begin{alignedat}{3}
& \textbf{CWAE1:} \quad & \hat{x}^i &= \overline{G}_\mathcal{X}(\hat{z}^i,\hat{u}^i), \quad & \hat{u}^i &= \overline{\Phi}_\mathcal{X}(y^i,x^i), \\
& \textbf{CWAE2:} \quad & \hat{x}^i &= G_\mathcal{X}(\hat{y}^i,\hat{u}^i), \quad & \hat{u}^i &= \Phi_\mathcal{X}(\hat{z}^i,x^i), \\
& \textbf{CWAE3:} \quad & \hat{x}^i &= \overline{G}_\mathcal{X}(\hat{z}^i,\hat{u}^i), \quad & \hat{u}^i &= \Phi_\mathcal{X}(\hat{z}^i,x^i).
\end{alignedat}
\end{equation*}

Samples from the approximate conditional distribution of $P_{X\mid Y=y}$ are subsequently generated by drawing $u \sim \mathcal{N}(0, \mathrm{Id}_{d_\mathcal{U}})$ and evaluating

\begin{equation*}
    \begin{cases}
    \overline{G}_\mathcal{X}^*(\Phi_\mathcal{Y}^*(y) ,u) & \text{for \textbf{CWAE1} and \textbf{CWAE3}}, \\
    G_\mathcal{X}^*(y ,u) & \text{for \textbf{CWAE2}}.
    \end{cases}
\end{equation*}

\begin{figure}[t]
    \centering
    \includegraphics[width=1\linewidth, trim={5 15 15 20}, clip]{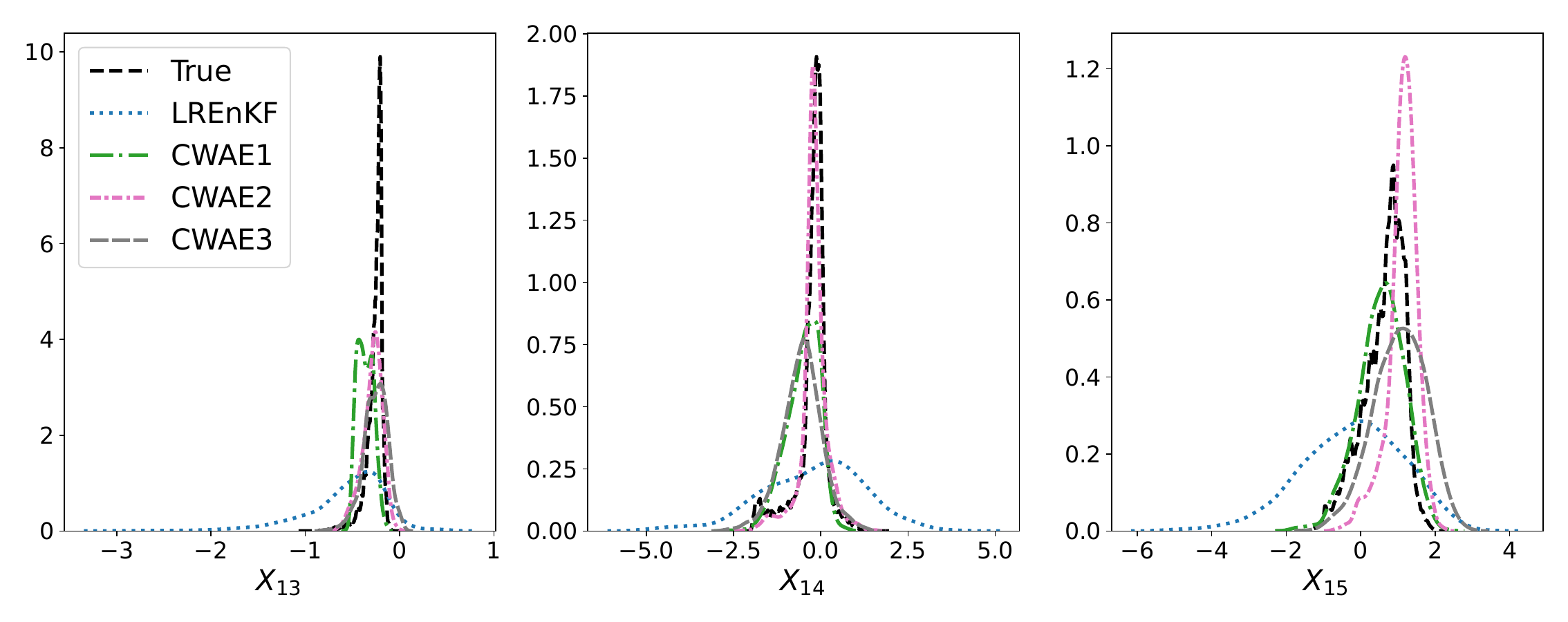}

    \caption{ 
    Numerical results for the synthetic example in Sec.~\ref{sec: Low-Dimensional Latent Structure Embedded in a High-Dimensional State}. The figure shows the sample
distributions for the last three states left to right for $y = \mathbf{1}_{d_\mathcal{Y}}$.
    }
    \label{fig:h2l_fig_particle}
\end{figure}

\subsection{Experiment Setup}
We compared the three CWAE variants against LREnKF~\cite[Algorithm 1]{leprovost2022lowrank} an algorithm which employs a low-rank approximation of the EnKF via an LIS approach. In all experiments, a GAN-based loss~\cite{GooPouBen14} was employed for the penalty term in~\eqref{eq: CCWAE reg objective}, utilizing the Jensen-Shannon divergence. Hyperparameter tuning was carried out using the SMAC3~\cite{smac3} and Optuna~\cite{optuna_2019} frameworks. The numerical code used to produce the results is available online.
\footnote{\url{https://github.com/Mohd9485/CWAE}}

\subsection{Nonlinear Manifold Embedding With Cubic Observations}

\label{sec: Low-Dimensional Latent Structure Embedded in a High-Dimensional State}

In this experiment, we construct a synthetic example designed to assess the
performance of the proposed method in settings where the
full ambient state $X$
is constructed using a low-dimensional manifold.
Let $\psi : \mathbb{R}^{d_\mathcal{U}} \rightarrow
\mathbb{R}^{d_{\mathcal{X}}-d_{\mathcal{U}}}$ be a deterministic nonlinear smooth
embedding map which lifts the latent variables into the remaining coordinates
of the ambient space. The full ambient state $X \in \mathbb{R}^{d_\mathcal{X}}$ is then
constructed as
\[
X = \begin{bmatrix} X_1 \\ \vdots \\ X_{d_\mathcal{U}} \\
\psi(X_1,\ldots,X_{d_\mathcal{U}}) \end{bmatrix} + \gamma V, \quad
\begin{bmatrix} X_1 \\ \vdots \\ X_{d_\mathcal{U}}
\end{bmatrix}\sim  \mathcal{N}(0,
\mathrm{Id}_{d_\mathcal{U}})
\]
where $V$ is a $d_{\mathcal{X}}$-dimensional standard Gaussian random
variable. Thus, although $X$ resides in $\mathbb{R}^{d_\mathcal{X}}$, it is
constrained to lie on a $d_\mathcal{U}$-dimensional manifold. The observation vector $Y \in \mathbb{R}^{d_\mathcal{Y}}$
is given by
\[
Y = h\!\left( \begin{bmatrix} X_1 & \dots & X_{d_\mathcal{Y}}
\end{bmatrix}^\top \right) + \sigma \varepsilon,
\]
where $h(x) = x^3$ is an element-wise cubic function applied to the first
$d_\mathcal{Y}$ components of $X$, $\varepsilon \sim \mathcal{N}(0,
\mathrm{Id}_{d_\mathcal{Y}})$ is a standard Gaussian observation noise, and the
dimensions satisfy $d_\mathcal{X} \geq d_\mathcal{Y} \geq d_\mathcal{U}$.
Consequently, the observations directly reveal the underlying low-dimensional
latent structure.

We set $d_\mathcal{X} = 5 \times d_\mathcal{U}$, $d_\mathcal{Y} = 2 \times
d_\mathcal{U}$, $\gamma = 10^{-2}$, and $\sigma = 4 \times 10^{-1}$. The
results are depicted in Figure~\ref{fig:h2l_fig_particle} for $d_\mathcal{U}
= 3$, $N = 1000$, and $y = \mathbf{1}_{d_\mathcal{Y}}$, where the sample
distributions for the last three states are presented. The ground-truth
posterior distribution is computed by simulating Sequential Important Resampling (SIR)~\cite{doucet09} with $5 \times 10^6$
samples. The numerical results indicate that the CWAE variants outperform
LREnKF across the considered experimental settings.

To further investigate the robustness of the CWAE methods, we implement the
experiment across varying problem dimensions and report the results in
Table~\ref{tab:distance_comparison}, averaged over $10$ independent
simulations. The results confirm that all CWAE variants consistently
outperform LREnKF; notably, CWAE2 exhibits stable performance relative to the
other two variants. A more rigorous analysis of this behavior is left as a
direction for future work.

\begin{table}[h]
\centering
\caption{$W_2$ error across methods and problem dimensions for the nonlinear embedded example in Sec.~\ref{sec: Low-Dimensional Latent Structure Embedded in a High-Dimensional State}.}
\begin{tabular}{ccccc}
\hline
\textbf{$d_{\mathcal{X}}$} & \textbf{LREnKF} & \textbf{CWAE1} & \textbf{CWAE2} & \textbf{CWAE3} \\
\hline
10  & 3.3855 & 1.1911 & \textbf{1.1234} & 1.9790 \\
20 & 3.7908 & 1.7968 & \textbf{1.5552} & 2.4946 \\
30 & 4.6513 & 4.0965 & \textbf{3.5174} & 4.0098 \\
40 & 5.9057 & 4.7601 & \textbf{4.4719} & 5.1238 \\
50 & 7.4993 & \textbf{5.6551} & 5.6583 & 6.5567 \\
\hline
\end{tabular}
\label{tab:distance_comparison}
\end{table}

\subsection{Spherical Posterior Example}
\label{sec:cone_example}

\begin{figure}[t]
    \centering
    \begin{overpic}[width=1\columnwidth,trim={70 10 70 70},clip]{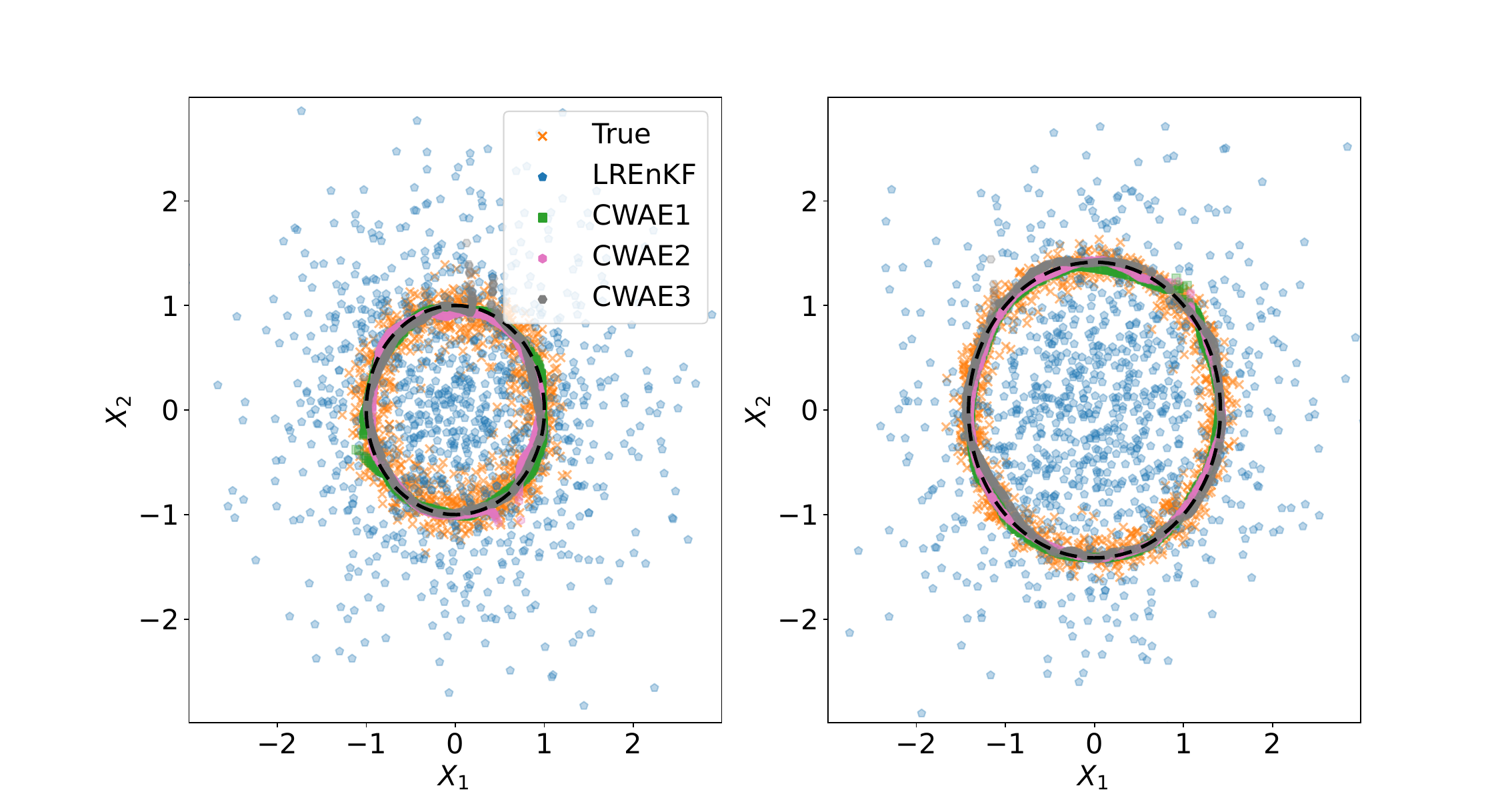}

    \put(14,48){\makebox(0,0)[c]{$y=1$}}

    \put(64,48){\makebox(0,0)[c]{$y=2$}}
  \end{overpic}
    \caption{Numerical results for the spherical posterior example in Sec.~\ref{sec:cone_example}. Both panels show the sample distributions for each method alongside the true distribution for two distinct values of the observation $y$. The left panel corresponds to $y = 1$ and the right panel corresponds to $y = 2$. }
    \label{fig:spherical_fig}
\end{figure}

Consider the model
\[
Y = \sum_{i=1}^{d_\mathcal{X}} X_i^2 + \sigma\varepsilon,
\quad
\varepsilon \sim \mathcal{N}(0,1),
\]
where $\sigma > 0$ controls the observation noise level and $X$ has a standard Gaussian distribution in $\mathbb{R}^{d_\mathcal{X}}$. This example is designed to evaluate the proposed methods in a setting where the posterior concentrates on a nonlinear low-dimensional manifold embedded in a high-dimensional ambient space.

To see this, consider the noiseless case $\sigma = 0$, where the observation equation reduces to $Y = \|X\|_2^2$. For a fixed observation $Y = y > 0$, the states satisfying the constraint $y = \sum_{i=1}^{d_\mathcal{X}} X_i^2$ lie on the $(d_\mathcal{X}-1)$-dimensional sphere of radius $\sqrt{y}$. Hence, although the ambient state space is $\mathbb{R}^{d_\mathcal{X}}$, the conditional distribution of $X$ given $Y = y$ concentrates around a nonlinear manifold of dimension $d_\mathcal{X} - 1$. In the special case $d_\mathcal{X} = 2$, the constraint $y = X_1^2 + X_2^2$ defines a circle in $\mathbb{R}^2$.

The numerical results for $\sigma = 2 \times 10^{-1}$ are presented in Figure~\ref{fig:spherical_fig}. The left panel shows the scatter plot of the samples for each method alongside the ground-truth samples and the dashed black circle of radius $\sqrt{y}$ representing the posterior mean. The right panel depicts the same comparison for $y = 2$. The results demonstrate that CWAE outperforms LREnKF in capturing the posterior mean for $d_{\mathcal{U}} = 1$.

\subsection{Incompressible Flow Field Reconstruction}\label{sec: flow_example}

We evaluate the proposed method in a high-dimensional setting, where the task is to reconstruct velocity fields from sparse, low-dimensional, and noisy observations. The flow field is simulated using the Lattice Boltzmann method for 2D incompressible flow\footnote{\url{https://github.com/mohamedayoub97/Lattice-Boltzmann-Simulation-LBS}} past a cylindrical obstruction at Reynolds number 281, exhibiting Kármán vortex street formation without chaotic behavior, as depicted in Fig.~\ref{fig: flow_field_simulation_samples}. 

We denote the velocity field by $\textbf{u}=(u_x, u_y)$. The state $X \in \mathbb{R}^{d_{\mathcal{X}}}$ is defined as the $d_{\mathcal{X}} = 2\times m \times m$ discretization of the two components of the velocity field. The discretization grid is on a window covering the cylinder wake. The state $X$ is also corrupted with additive process noise $\mathcal{N}(0, \gamma \mathrm{Id}_{d_{\mathcal{X}}})$. The observation function $h: \mathbb{R}^{d_{\mathcal{X}}} \to \mathbb{R}^{d_{\mathcal{Y}}}$ selects $d_{\mathcal{Y}}$ pixels arranged on a uniform grid, as illustrated in Figure~\ref{fig: flow_field_reconstruction}, yielding the observation model
\[
Y = h(X) + \sigma\varepsilon, \qquad \varepsilon \sim \mathcal{N}(0, \mathrm{Id}_{d_{\mathcal{Y}}}).
\]

To encourage the encoder and decoder networks to produce physically meaningful flow fields, we add a divergence and smoothness penalty to our loss~\eqref{eq:ultimate-loss} that penalizes velocity fields that violate incompressibility or are discontinuous ~\cite{RAISSI2019686,10.1007/978-3-319-49409-8_1}. The expressions for the additional penalty terms are evaluated using standard finite-difference techniques and appear in our github repository.

\begin{figure}[t]
    \centering
    \includegraphics[width=1\linewidth, trim={0 0 0 0}]{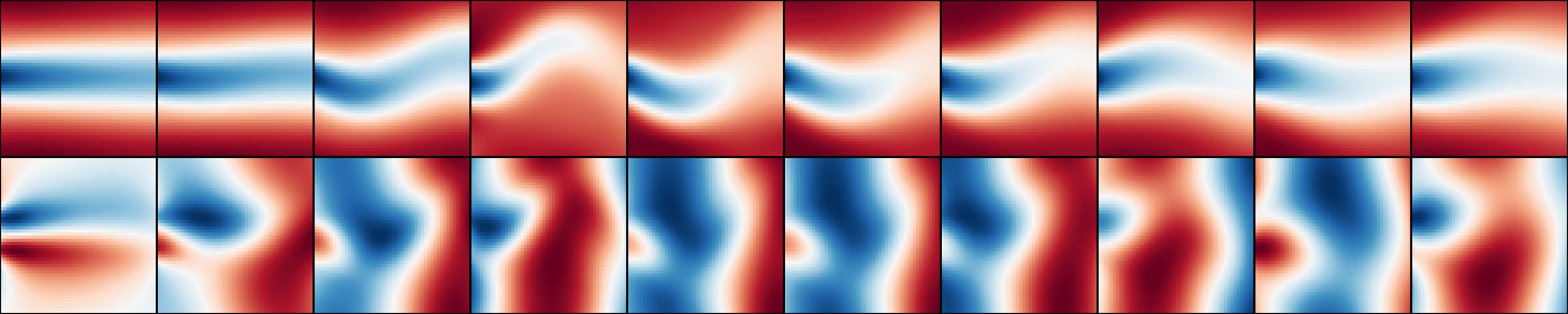}
    \caption{Simulation results for the flow field example in Sec.~\ref{sec: flow_example}. (upper) $u_x$ component over time showing a sinusoidal stream and deceleration. (lower) $u_y$ component over time showing alternating sign and symmetric vortex formation. Taken with $\text{Re}=281$ and a cylindrical obstruction.}
    \label{fig: flow_field_simulation_samples}
\end{figure}

\begin{figure}[ht]
    \centering

    \begin{subfigure}{0.48\linewidth}
        \centering
        \includegraphics[width=1\linewidth, trim={25 25 25 25}]{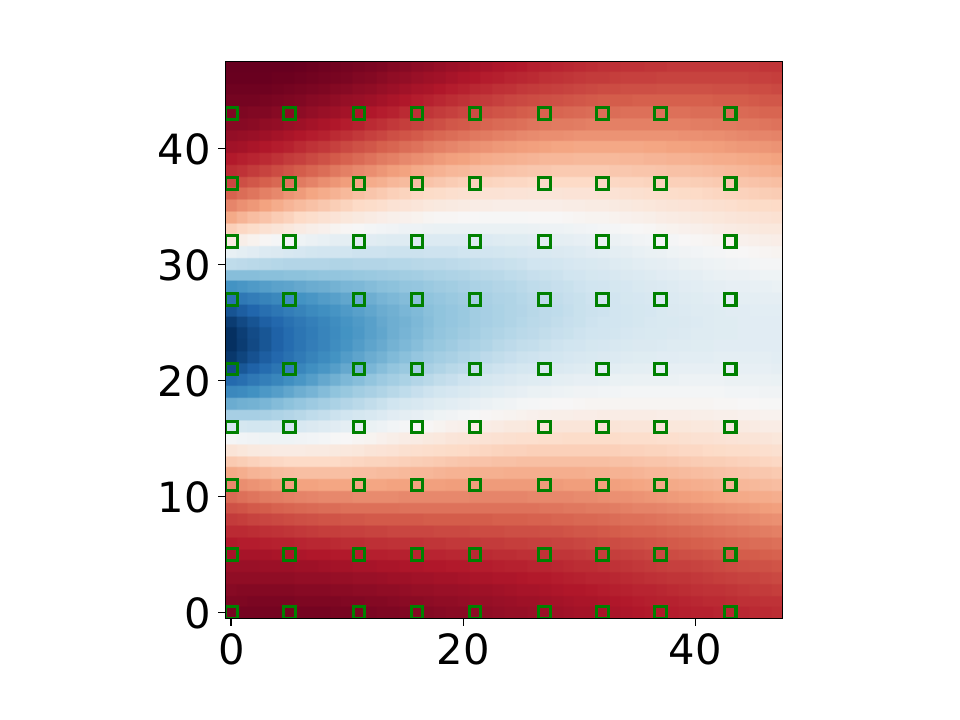}
        \caption{original $u_x$}
        \label{fig:a}
    \end{subfigure}
    \hfill
    \begin{subfigure}{0.48\linewidth}
        \centering
        \includegraphics[width=1\linewidth, trim={25 25 25 25}]{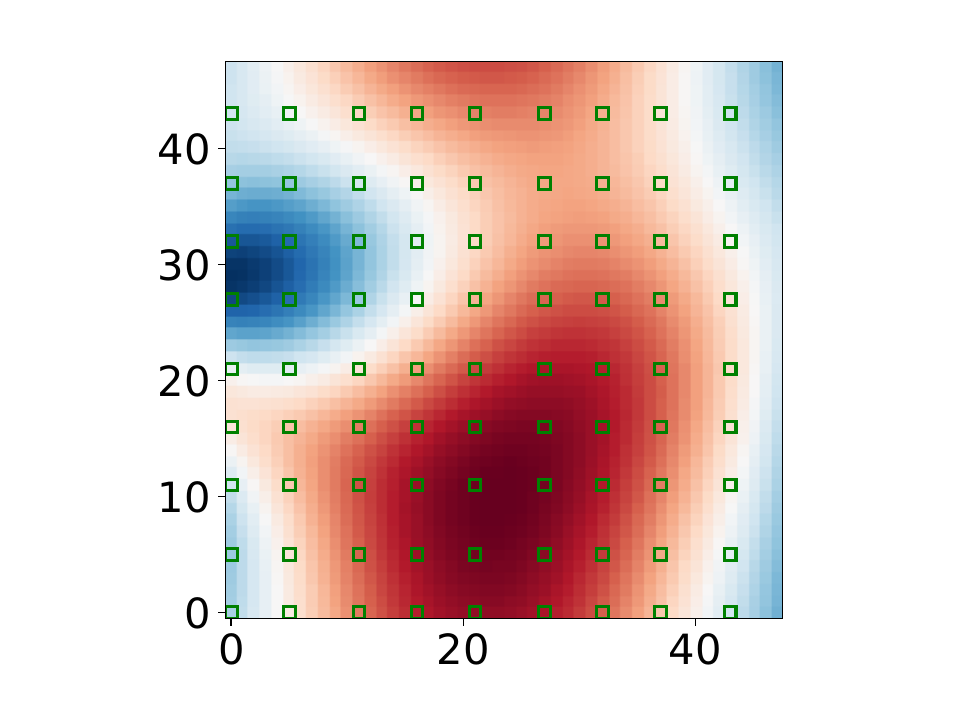}
        \caption{original $u_y$}
        \label{fig:b}
    \end{subfigure}

    \vspace{0.75em}

    \begin{subfigure}{0.48\linewidth}
        \centering
        \includegraphics[width=1\linewidth, trim={25 25 25 25}]{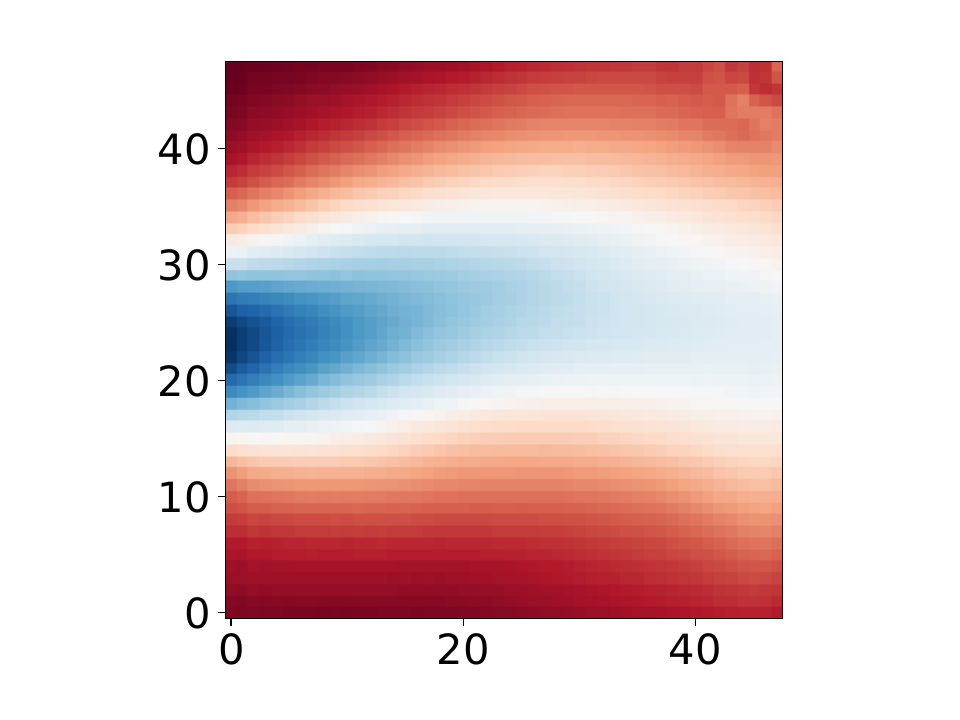}
        \caption{average $\hat{u}_x$}
        \label{fig:c}
    \end{subfigure}
    \hfill
    \begin{subfigure}{0.48\linewidth}
        \centering
        \includegraphics[width=1\linewidth, trim={25 25 25 25}]{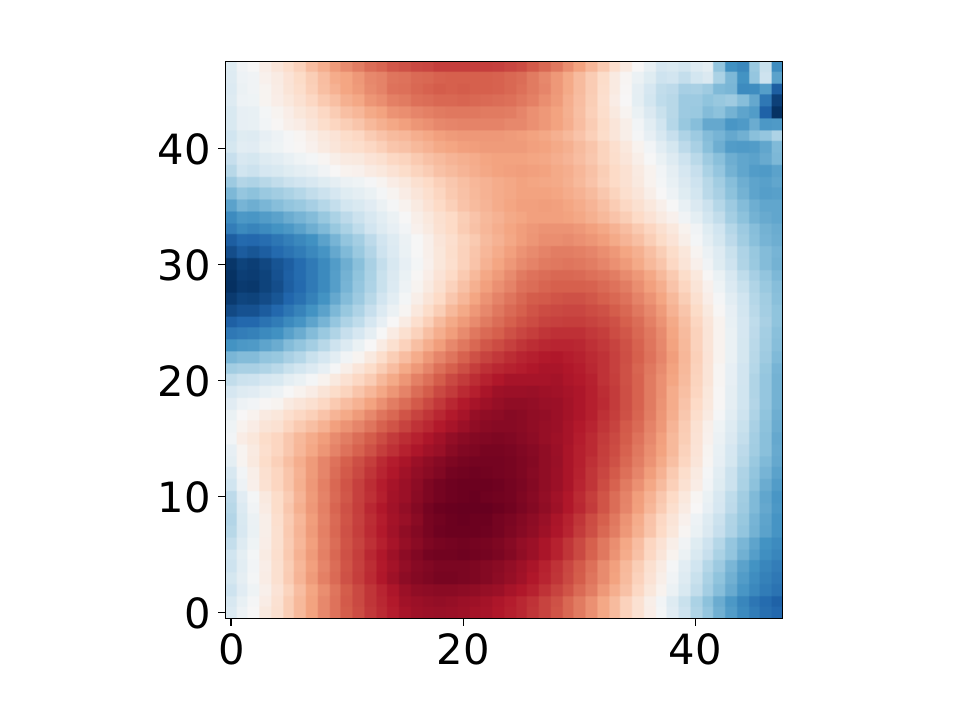}
        \caption{average $\hat{u}_y$}
        \label{fig:d}
    \end{subfigure}
    \caption{Numerical results for the flow field example in Sec.~\ref{sec: flow_example}. (a) and (b) are the original field with the $d_{\mathcal{Y}}=9 \times 9$ observed pixels boxed in green. The bottom panels are reconstructed from the noisy observation and averaged per-component over 1000 samples.}
    \label{fig: flow_field_reconstruction}
\end{figure}

\begin{figure}[ht]
    \centering

    \begin{subfigure}{0.48\linewidth}
        \centering
        \includegraphics[width=1\linewidth, trim={0 0 0 0}]{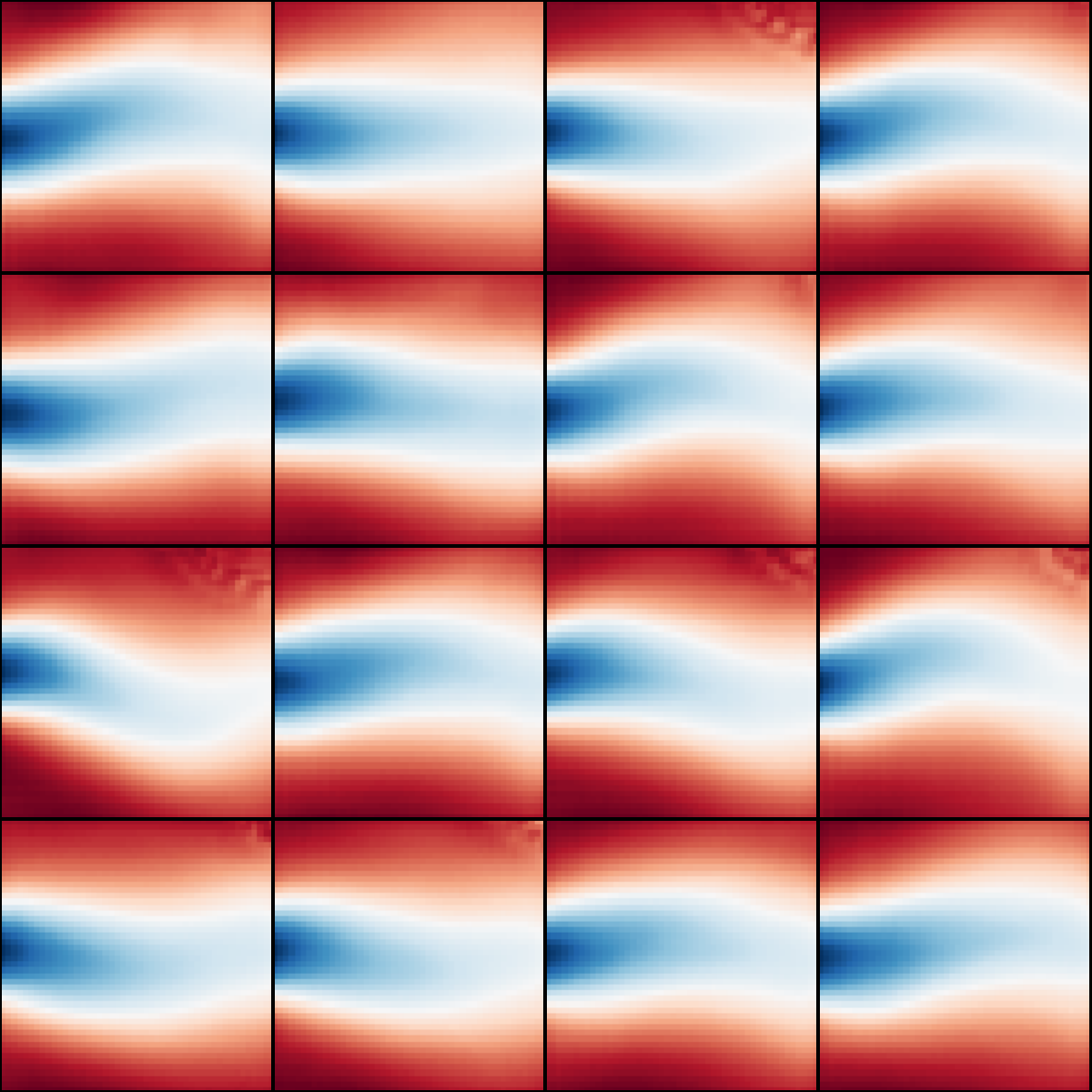}
        \caption{$\hat{u}_x$ samples}
    \end{subfigure}
    \hfill
    \begin{subfigure}{0.48\linewidth}
        \centering
        \includegraphics[width=1\linewidth, trim={0 0 0 250}]{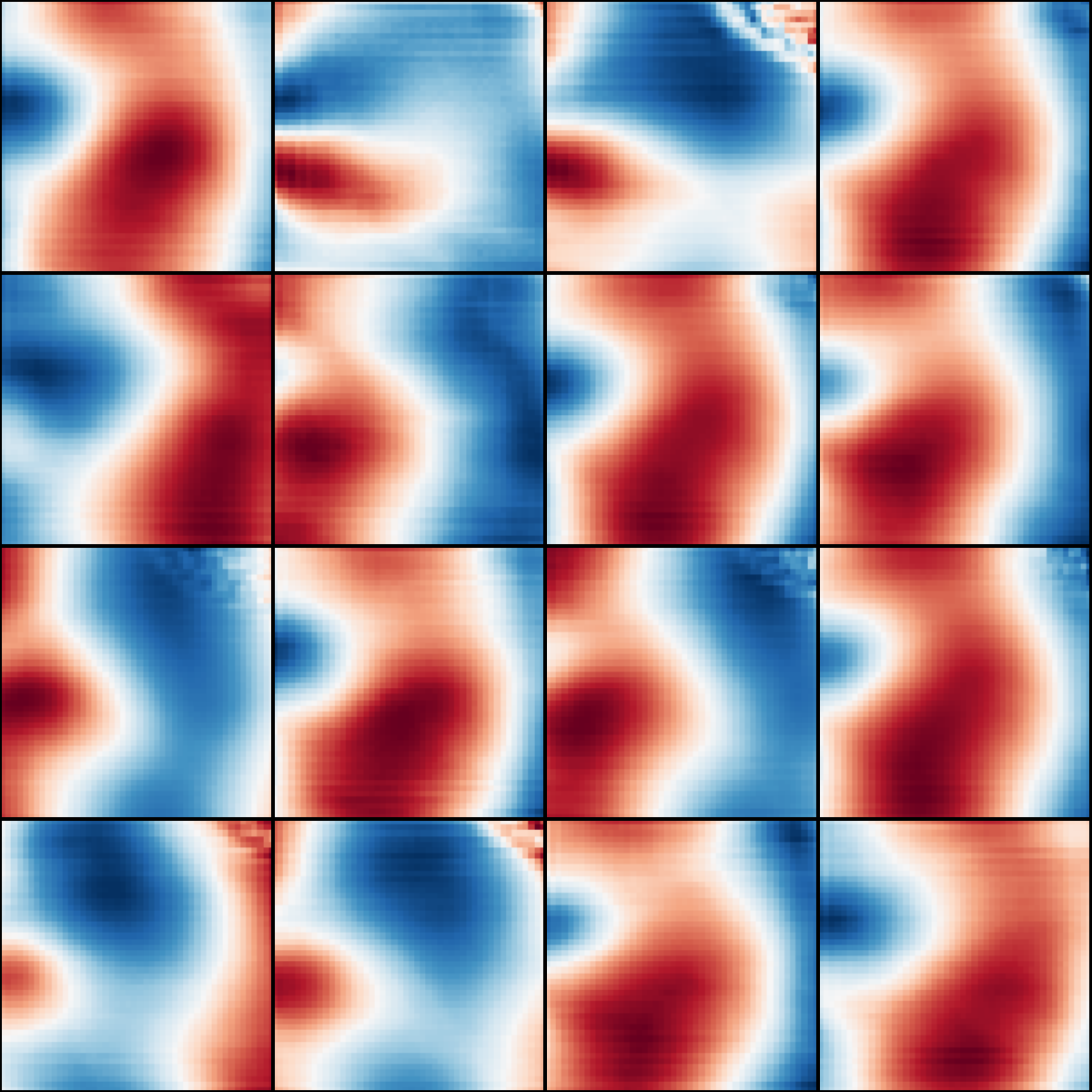}
        \caption{$\hat{u}_y$ samples}
    \end{subfigure}
    \caption{Numerical results for the flow field example in Sec.~\ref{sec: flow_example} showing 16 out of 1000 samples. The left panel corresponds to the $u_x$ component and the right panel corresponds to the $u_y$ component.}
    \label{fig: flow_field_samples}
\end{figure}

Performance is quantified via the relative mean squared error (MSE) in estimating a function $\phi$ of the state:
\[
\text{MSE}_{rel}(\phi) = \mathbb{E} \frac{\left\|   \frac{1}{N}\sum_{i=1}^{N}\phi(X^i) - \phi(X)\right\|^2} {\|\phi(X)\|^2}
\]
where $X^i$ is generated sample from the conditional distribution, $\phi(X) = X$ and $\phi(X) = X^2$ assess first- and second-moment accuracy, respectively.

\begin{table}
\centering
\caption{Relative MSE across methods for the flow field example in Sec.~\ref{sec: flow_example}.}
\begin{tabular}{ccccc}
\hline
\textbf{$\phi(X)$} & \textbf{LREnKF} & \textbf{CWAE1} & \textbf{CWAE2} & \textbf{CWAE3} \\
\hline
$X$  & 0.0437 & 0.0110 & 0.0281 & \textbf{0.0054} \\
$X^2$ & 0.0716 & 0.0225 & 0.0584 & \textbf{0.0124} \\
\hline
\end{tabular}
\label{tab:MSE_comparison}
\end{table}

Figures~\ref{fig: flow_field_reconstruction} and~\ref{fig: flow_field_samples} present results for CWAE2 with $\gamma = 0$, $\sigma = 1 \times 10^{-2}$, demonstrating that the method accurately captures the structure of the $4608$-dimensional field (two velocity components on a $48\times 48$ grid) using $N = 1000$ particles, recovering multiple flow phases while preserving the correct mean. Table~\ref{tab:MSE_comparison} reports the relative MSE for all methods at $N = 5000$ particles, $\gamma = 1 \times 10^{-1}$, and $\sigma = 2 \times 10^{-1}$, averaged over $10$ independent simulations. The results indicate that CWAE outperforms LREnKF in both first- and second-moment accuracy, demonstrating superior fidelity in representing the true distribution of the flow field.

\section{Discussion}\label{sec:discussion}
    
We introduced CWAEs, a conditional sampling framework that integrates 
triangular measure transport with the Wasserstein autoencoder architecture to exploit 
low-dimensional structure in conditioning problems. Three architectural variants were 
proposed and benchmarked against LREnKF across synthetic examples and a 
high-dimensional flow reconstruction task, with results consistently favoring CWAEs. 
A current limitation is the sensitivity of the training procedure to the regularization 
parameter $\lambda$ and network hyperparameters, which may restrict applicability under 
constrained computational budgets. Future work will focus on extending CWAEs to 
sequential filtering settings, establishing theoretical conditions under which each 
variant is most appropriate, and benchmarking against LIS methods and paired Wasserstein formulations. Exploring alternative divergences,  such as maximum mean discrepancy or Sinkhorn-type penalties, to improve training stability is also a natural direction.

    \bibliographystyle{plain}
    \bibliography{references}

\end{document}